\documentclass[conference]{IEEEtran}

\usepackage[
    top=0.73in,
    bottom=1.06in,
    left=0.625in,
    right=0.625in
]{geometry}

\usepackage[utf8]{inputenc}
\usepackage[T1]{fontenc}
\usepackage{cite}
\usepackage{graphicx}
\usepackage{psfrag}
\usepackage{subfigure}
\usepackage[caption=false]{subfig}
\usepackage{url}
\usepackage{ifthen}
\usepackage[cmex10]{amsmath}
\usepackage{array}
\usepackage{amssymb}
\usepackage{amsthm}
\usepackage{mathtools}
\usepackage{amsfonts}
\usepackage{epstopdf}
\usepackage{algorithm}
\usepackage{algorithmic}

\newtheorem{theorem}{Theorem}

\usepackage{setspace}
\usepackage{amscd}
\usepackage{mathrsfs}
\usepackage{epsfig}
\usepackage{color}
\usepackage{textcomp}
\usepackage{comment}
\usepackage{xcolor}
\usepackage{pgf}
\usepackage{pgfplots}
\pgfplotsset{compat=1.9}
\usepackage{tikz}
\usetikzlibrary{decorations.pathreplacing}
\usetikzlibrary{decorations.markings}
\usetikzlibrary{patterns}
\usepackage[outline]{contour}
\contourlength{1.2pt}
\pgfplotsset{compat=1.18}
\usetikzlibrary{plotmarks}
\usetikzlibrary{spy}
\usetikzlibrary{calc}
\usetikzlibrary{shapes,plotmarks,positioning,fit,chains,tikzmark,arrows,hobby,arrows.meta}
\usetikzlibrary{decorations.pathreplacing,calligraphy}
\usepgfplotslibrary{groupplots}
\usepackage{enumitem} 
\setlist{itemsep=0ex,parsep=0ex,topsep=1ex,partopsep=0ex,leftmargin=1em}

\usepackage[font=scriptsize]{caption}

\usepackage{import}

\IEEEoverridecommandlockouts

\DeclareMathOperator*{\argmax}{argmax}
\DeclareMathOperator*{\argmin}{argmin}

\usepackage{dsfont}
\usepackage{bbm}

\newcommand{\subtext}[1]{\text{\fontfamily{cmr}\fontshape{n}\fontseries{m}\selectfont{}#1}}
\newcommand{\sub}[1]{\ensuremath{_{\subtext{#1}}}} 

\def\BibTeX{{\rm B\kern-.05em{\sc i\kern-.025em b}\kern-.08em
    T\kern-.1667em\lower.7ex\hbox{E}\kern-.125emX}}

\begin{document}

\title{Conformal Sequential Decision Making With  Cost and Sublinear Policy Regret Guarantees
	\thanks{The work of O. Simeone was supported by an Open Fellowship of the EPSRC (EP/W024101/1), by the EPSRC (EP/X011852/1) and by the European Research Council (ERC) (No. 101198347).}
}

\author{%
	\IEEEauthorblockN{Jayadev Naram\IEEEauthorrefmark{1},
                      Giuseppe Durisi\IEEEauthorrefmark{1},
                      Matteo Zecchin\IEEEauthorrefmark{2},
                      and Osvaldo Simeone\IEEEauthorrefmark{3}}
	\IEEEauthorblockA{\IEEEauthorrefmark{1}
        Department of Electrical Engineering,
		Chalmers University of Technology,
		412 96 Göteborg, Sweden}
	\IEEEauthorblockA{\IEEEauthorrefmark{2}
        Communication Systems Department,
		EURECOM, 06410 Sophia Antipolis, France 
        }
	\IEEEauthorblockA{\IEEEauthorrefmark{3}
        Institute for Intelligent Networked Systems (INSI), 
        Northeastern University London,
        E1W 1LP London, UK}
		Email: \{jayadev, durisi\}@chalmers.se, zecchin@eurecom.fr, o.simeone@nulondon.ac.uk
}

\maketitle

\begin{abstract}
Automated decision makers, such as AI agents that control robots or network devices, must often satisfy hard cost constraints while maintaining competitive performance. In such settings, a cost measure, such as a safety violation rate, energy consumption, or a distortion level, must stay within a given budget in every episode of operation. Furthermore, a cumulative reward metric, accounting for per-step performance, must stay close to the reward of a meaningful reference policy. We formulate this setup as an episodic sequential decision problem in which the state sequence is arbitrary, and the reward and cost may depend on past actions within the same episode.  We propose the online conformal episodic agent (OCEA), a policy that selects one candidate action per episode using a regret-controlled algorithm, and switches to a zero-cost action when the remaining budget cannot absorb the maximum cost of one more round. We prove that OCEA satisfies per-episode cost constraints for every state sequence and for every realization of its internal randomness, while achieving a sublinear expected policy regret with respect to the best action sequence in hindsight from a comparator class satisfying the same per-episode cost constraints. We apply OCEA to risk-averse control and online compression with distortion guarantees.
\end{abstract}

\begin{IEEEkeywords}
	Online decision making, universal compression, predictive control, conformal prediction.
\end{IEEEkeywords}

\section{Introduction}

\subsection{Motivation}

In engineering fields such as robotics and telecommunications, automatic control systems that include AI agents are expected to become increasingly common~\cite{lindemann2025formal,simeone2026conformal}. These systems make decisions in real time, often with the help of learned predictors, and are expected to abide by the cost and performance requirements of the given application. 

Cost constraints must often hold strictly over each well-defined episode of operation, ensuring that the system cost does not exceed a given budget. Examples include delivery drones with fixed energy budgets between recharging events; robots with well-defined safety requirements within each subtask;  and communication devices with constraints on the number of outages within each transmission frame. In all these cases, a guarantee that holds only on average over a long time is not sufficient, because a single faulty episode can  cause a service outage.

In these scenarios, performance is often measured by a reward signal, such as quality of service or throughput, and the goal is to ensure that, under any conceivable sequence of environment states, the cumulative reward is not degraded, compared to a meaningful baseline, by more than an acceptable margin. In particular, since the actions affect the future rewards through the history of the system, the baseline must be ideally evaluated along its own trajectory~\cite{arora2012online}.

\subsection{Related Work}

Several existing frameworks, summarized next, address only parts of this problem, and this work aims to address gaps in the state of the art.

\noindent\emph{Online conformal prediction.} This framework produces prediction sets with long-term deterministic coverage guarantees~\cite{gibbs2021adaptive,angelopoulos2024online}, and  conformal risk control extends these guarantees to general bounded losses~\cite{angelopoulos2024conformal,zecchin2024localized}. These methods control a time-averaged cost over the full horizon, and they do not ensure that the cost is within budget in every episode.

\noindent\emph{Conformal methods for safe control and decision making.} Online conformal prediction \cite{gibbs2021adaptive} has been used for safe planning in dynamic environments~\cite{lindemann2023safe}, to calibrate model predictive control with probabilistic sequence models~\cite{lekeufack2024conformal,zecchin2024forking}, and to monitor temporal-logic requirements at runtime~\cite{cairoli2023conformal}. We refer to~\cite{lindemann2025formal} for a recent survey. Risk-averse decision making based on prediction sets was given decision-theoretic foundations in~\cite{kiyani2025decision}, with extensions presented in~\cite{zhu2026conformal,farzaneh2026}. In telecommunications, online conformal methods support compression with distortion guarantees~\cite{zecchin2026prediction} and the calibration of AI modules~\cite{simeone2026conformal}. None of these works simultaneously provides per-episode guarantees and regret guarantees on the reward.

\noindent\emph{Online optimization with constraints and policy regret.} Online convex optimization with long-term constraints allows sublinear cumulative constraint violation and standard regret~\cite{mahdavi2012trading,yu2017online}.  Unlike these frameworks, the cost budget in our problem is renewed in every episode. Furthermore, standard regret compares the learner's cumulative reward with the one obtained by playing a fixed action along the learner's own history. Policy regret, as defined in \cite{arora2012online}, accounts instead for the baseline's own trajectory. Sublinear policy regret is generally not possible against adversaries with unbounded memory; however, it can be obtained for bounded-memory adversaries by playing each action over blocks of rounds~\cite{arora2012online}. Nevertheless, these works do not include strict cost constraints.

\subsection{Contributions}

This paper studies a sequential decision problem that is divided into episodes. The state sequence is arbitrary and exogenous, and the reward and the cost may depend on the history within each episode. The main contributions are as follows.

\begin{itemize}
	\item We formulate a sequential decision problem with two requirements: a per-episode cost constraint that must hold for every state sequence and every realization of the learner's randomness, and sublinear expected policy regret with respect to the best feasible action sequence in a specified comparator class, with the learner and comparator each evaluated along the history induced by its own actions.
	\item We propose a policy, which we refer to as online conformal episodic agent (OCEA), that selects one candidate action per episode by using a regret-controlling algorithm~\cite{auer2002nonstochastic}, and that switches to a zero-cost action when the remaining budget cannot absorb the maximum cost of one more round.
	\item We prove that OCEA satisfies the per-episode cost constraint deterministically, and that its expected policy regret is sublinear in the time horizon $T$. 
	\item We apply the framework to risk-averse control~\cite{kiyani2025decision} and online  compression~\cite{ganesan2025online}. In both applications, actions correspond to thresholds that define prediction sets as in conformal prediction.
\end{itemize}

\section{Problem formulation}

\subsection{Setting}
\label{sec:general-setup}

We consider a sequential decision problem over a time horizon~$T$ that is partitioned into $J$ consecutive blocks, or episodes. As detailed in this section, the problem is characterized by an arbitrary exogenous state sequence, as well as by per-episode reward and cost metrics, on which we enforce deterministic constraints. We first describe the general setting, and then provide specific examples.

The $J$ episodes have respective lengths $n_1,\dots,n_J$, with $\sum_{j=1}^J n_j=T$. We index the $k$th round of block $j$ by $(j,k)$, where $j\in[J]$ and $k\in[n_j]$, with $[n]=\{1,\ldots,n\}$ for any integer $n$. The corresponding global time is
\begin{equation}
	\tau(j,k)=\sum_{r<j}n_r+k.
\end{equation} At round $k$ in block $j$, the decision maker selects, possibly following a randomized policy, an action $A_{j,k}\in\mathcal A=\{a_1,\dots,a_K\}$ where $2\le K<\infty$. After the action is selected, the state $X_{j,k}\in\mathcal X$ is revealed. The state sequence may be arbitrary, but is assumed to be exogenous, meaning that it is not affected by the learner's actions or randomization. The interaction history available before round $(j,k)$ is
\begin{equation}
	\label{eq:within-block-history}
	H_{j,k} = \bigl(A_{j,1:k-1},X_{j,1:k-1}\bigr),
\end{equation}
with $H_{j,1}=\emptyset$. 
Similarly, we write $H_{\tau(j,k)}=(A_{1:\tau(j,k)-1},X_{1:\tau(j,k)-1})$ for the history before global time $\tau(j,k)$. Note that the history is reset at the beginning of every block, enforcing a separation between different episodes.

The quality of action $A_{j,k}$ is evaluated via the reward
\begin{equation}
	r_{j,k}=r(A_{j,k},X_{j,k},H_{j,k})\in[r\sub{min},r\sub{max}]
\end{equation}
with $0 < \Delta r=r\sub{max}-r\sub{min}<\infty$, and by the cost 
 \begin{equation}
	c_{j,k}=c(A_{j,k},X_{j,k},H_{j,k})\in[0,1].
\end{equation}
Both reward and cost  functions can exhibit arbitrary dependence on the selected action $A_{j,k}$, the current state $X_{j,k}$, and the within-block history $H_{j,k}$. However, they do not depend on actions or states preceding the current block, ensuring the mentioned separation between episodes.

For an action sequence $A_{1:T}$, define its cumulative reward in block $j$ as
\begin{equation}
	R_j(A_{1:T})=\sum_{k=1}^{n_j}r(A_{j,k},X_{j,k},H_{j,k}),
\end{equation}
and its average cost as
\begin{equation}
	C_j(A_{1:T})=\frac{1}{n_j}\sum_{k=1}^{n_j}c(A_{j,k},X_{j,k},H_{j,k}).
\end{equation}

\subsection{Design Criteria}
\label{sec:design_criteria}

Our objective is to design a policy $A_{j,k}=\Pi(H_{\tau(j,k)})$ that satisfies a per-block cost constraint, while providing competitive performance in terms of cumulative reward.
Specifically, for a fixed per-episode cost budget level $D\in[0,1]$, we seek a policy whose action sequence satisfies the per-block cost constraint
\begin{equation}
	\Pr\left[\max_{j\in[J]} C_j(A_{1:T})\le D\right]=1
	\label{eq:reliability-objective}
\end{equation}
for every state sequence, where the probability is taken with respect to the learner's own internal randomness.

At the same time, we wish the action sequence $A_{1:T}$ to be competitive against the best action sequence in a comparator class whose members satisfy the same per-block constraint~\eqref{eq:reliability-objective}. To construct such a comparator class, we assume that a zero-cost action is available at every round. Specifically, there exists an action $A^\star\in\mathcal A$ such that, for every state $x\in \mathcal X$ and every possible history $h$, zero cost is incurred, i.e., 
\begin{equation}
    \label{eq:safe-action}
    c(A^\star, x, h)=0.
\end{equation}

Fix a nominal action $a\in\mathcal A$, which can be chosen in hindsight based on the state sequence. We define the comparator class so that it deviates minimally from this constant action,  while still satisfying the constraint (\ref{eq:reliability-objective}). To elaborate, write as $\widetilde A_{1:T}$ the sequence of actions produced by this policy and as \begin{equation}
    \label{eq:cost-enforced-comparator-history}
    \widetilde H_{j,k} = \left(\widetilde A_{j,1:k-1}, X_{j,1:k-1} \right)
\end{equation} the corresponding history of actions and states. The action $ \widetilde A_{j,k}  $ follows the nominal selection $a$ provided that this choice is allowed by the constraint (\ref{eq:reliability-objective}), while switching to the zero-cost action otherwise. 

Formally, let the accumulated cost in episode $j$ at round $k$ be and let its remaining block budget be $\widetilde g_{j,k} = Dn_j-\widetilde z_{j,k}.$
The action executed at round $(j,k)$ is  given by
\begin{equation}
	\label{eq:cost-enforced-comparator}
    \widetilde A_{j,k} =
    \begin{cases}
        a, & \text{if }\widetilde g_{j,k}\ge 1,\\
        A^{\star}, & \text{if }\widetilde g_{j,k}<1,
    \end{cases}
\end{equation} ensuring the deterministic constraint \begin{equation}
    \max_{j\in[J]}
    C_j(\widetilde A_{1:T})
    \le D.
    \label{eq:comparator-cost-constraint}
\end{equation} 
As the nominal action $a$ ranges over set $\mathcal A$,  the comparator class is accordingly defined as 
\begin{multline}
    \label{eq:comparator-class}
    \mathcal C_T(D) =\bigl\{ \widetilde A_{1:T} :\exists a\in\mathcal A \text{ s.t. } \widetilde A_{j,k} \text{ as in \eqref{eq:cost-enforced-comparator} } \forall (j,k) \bigr\}.
\end{multline}

Any comparator sequence in this class achieves a reward in block $j$ given by
\begin{equation}
	\label{eq:cost-enforced-comparator-reward}
    R_j(\widetilde A_{1:T}) = \sum_{k=1}^{n_j} r(\widetilde A_{j,k}, X_{j,k}, \widetilde H_{j,k}).
\end{equation} Following~\cite{arora2012online}, we define the \emph{policy} regret against the comparator class~\eqref{eq:comparator-class} as
\begin{equation}
	\operatorname{Reg}^{\mathrm{pol}}_T=\max_{\widetilde A_{1:T} \in \mathcal C_T(D)}\sum_{j=1}^J R_j(\widetilde A_{1:T}) - \sum_{j=1}^J R_j(A_{1:T}).
	\label{eq:policy-regret}
\end{equation}
We wish to guarantee sublinear expected policy regret, i.e.,
\begin{equation}
	\mathbb{E}[\operatorname{Reg}^{\mathrm{pol}}_T]=o(T), \quad T\to\infty
	\label{eq:sublinear-policy-regret}
\end{equation}
for every state sequence, where the expectation is taken with respect to the learner's own internal randomness.

\subsection{Examples}\label{sec:examples}

We now describe two examples of the general framework under study, which will be further discussed in Section \ref{sec:applications}.

\paragraph*{1) Robust Model Predictive Control Under Safety Constraints}
Consider a dynamical system operated over a sequence of control episodes, with each block corresponding to one episode. The goal here is for the controller to maximize performance, while ensuring that the frequency at which the realized
safety risk exceeds the certificate produced by the policy remains below the budget level $D$ within every episode.

Formally, at round $(j,k)$, the system is affected by an exogenous disturbance $X_{j,k}\in\mathcal X$, which is \emph{a priori} unknown. The action $A_{j,k}$ specifies a candidate model predictive controller $m_{A_{j,k}}(H_{j,k})$ and an associated safety certificate $\nu_{A_{j,k}}(H_{j,k})$. Let $s_{j,k}(m_{A_{j,k}}(H_{j,k}),X_{j,k})$ denote the realized safety risk, with larger values corresponding to greater risk. After state $X_{j,k}$ is revealed, the incurred cost is
\begin{multline}
	c(A_{j,k},X_{j,k},H_{j,k})     \\                                                 
	 =\mathds{1}\{s_{j,k}(m_{A_{j,k}}(H_{j,k}),X_{j,k})>\nu_{A_{j,k}}(H_{j,k})\}.\label{eq:risk-outage}
\end{multline}
Thus, the cost is $c_{j,k}=1$ whenever the realized risk exceeds the associated safety certificate and $c_{j,k}=0$ otherwise.
The reward $r_{j,k}$ can be any function that characterizes the control performance. 

\paragraph*{2) Online Compression Under Distortion Guarantees}

Following \cite{ganesan2025online,zecchin2026prediction}, we consider a prediction-aided compression
setting, in which encoder and decoder share a common predictor. At the beginning of every block, the
encoder and decoder reset their common predictor to the same initial state. Within block~$j$, they
maintain a common state $S_{j,k}$ containing the previously reconstructed symbols.
Using this state,
they form a predictive distribution $\widehat p_{j,k}(\cdot\mid S_{j,k})$ for the next source symbol~$X_{j,k}$.

Before observing $X_{j,k}$, the encoder and decoder select an action $A_{j,k}$, which determines an
encoder--decoder pair. Let $\widehat X_{j,k}$ denote the resulting reconstruction. The
cost is measured by the outage distortion 
\begin{equation}
	c(A_{j,k},X_{j,k},H_{j,k})=\mathds{1}\{X_{j,k}\neq \widehat X_{j,k}\},
\end{equation}
while the reward can be defined as the negative of the transmission rate. Accordingly, the design objective is to minimize
the number of transmitted bits, while satisfying the distortion requirement separately within every block.

\section{Decision Making with Cost and Reward Guarantees}
\label{sec:blockwise-exp3}

In this section, we present  a policy $\Pi$ that satisfies the
per-block constraint~\eqref{eq:reliability-objective}, while also exhibiting sublinear expected policy regret as per requirement \eqref{eq:sublinear-policy-regret}. We refer to the approach as OCEA. In Section \ref{sec:applications}, we will then specialize this general policy to the examples described in the previous section.

At the beginning of each block $j$, the decision maker draws an action index $I_j \in [K]$ according to a probability distribution~$\pi_j$, and then plays that
action throughout the block, i.e., $A_{j,k}=a_{I_j}$, provided that the cost constraint~\eqref{eq:reliability-objective} is not violated. In so doing, the decision maker adopts the same general constraint-enforcing approach as the comparator policy class introduced in Section~\ref{sec:design_criteria} with the key caveat that the decision maker does not have access to future states, operating in a causal fashion.

Specifically, the probability distribution $\pi_j$ is updated online at the end of the block via the EXP3 algorithm~\cite{auer2002nonstochastic}. Accordingly, given a weight vector $\boldsymbol w_j = \bigl[w_j(1),\ldots,w_j(K)\bigr]$, with $w_1(i) = 1$ for $i\in [K]$, the probability is set as 
\begin{equation}
	\label{eq:exp3-arm-distribution}
	\pi_j(i) = (1-\gamma)\frac{w_j(i)} {\sum_{r=1}^{K}w_j(r)} + \frac{\gamma}{K},
\end{equation}
where $\gamma \in (0, 1]$ is an exploration parameter. 
\enlargethispage{-0.04in}

Let the learner's accumulated cost before round~$k$ of block~$j$ be $z_{j,k} = \sum_{s=1}^{k-1} c(A_{j,s},X_{j,s},H_{j,s})$, with $z_{j,1}=0$ and the remaining block budget be $g_{j,k}=Dn_j-z_{j,k}$. The learner executes the action
\begin{equation}
    \label{eq:constraint-enforced-action}
    A_{j,k} =
    \begin{cases}
        a_{I_j}, & \text{if }g_{j,k}\ge 1,\\
        A^\star, & \text{if }g_{j,k}<1,
    \end{cases}
\end{equation}
applying the same rule used to construct the comparator sequences in~\eqref{eq:cost-enforced-comparator}, with $a_{I_j}$ as its nominal action.

For each $i\in[K]$, let $R_j(i)$ denote the counterfactual reward in block $j$ obtained by playing~\eqref{eq:constraint-enforced-action} with $a_i$ as the nominal action and evaluating the reward of the resulting actions along their own counterfactual history. 
Only the reward
\begin{equation}
    \label{eq:observed-block-reward}
    R_j(I_j)=\sum_{k=1}^{n_j}r(A_{j,k},X_{j,k},H_{j,k})
\end{equation}
is observed by the learner.

Let $n_{\max}=\max_{j\in[J]}n_j<\infty$. Since the per-round reward lies in the interval $[r\sub{min},r\sub{max}]$, the per-block reward satisfies
\begin{equation}
	n_jr\sub{min} \le R_j(i) \le n_jr\sub{max}.
\end{equation}
This allows us to define the normalized per-block reward
\begin{equation}
	\label{eq:normalized-block-reward}
	\bar R_j(i) = \frac{R_j(i)-n_jr\sub{min}} {\Delta r ~ n_{\max}} \in[0,1],
\end{equation}
based on which the estimated weighted rewards for all actions $a_i$ with $i\in [K]$ are  obtained as 
\begin{equation}
	\label{eq:importance-weighted-block-reward}
	\widehat R_j(i)
	=
	\frac{\bar R_j(I_j)\mathds{1}\{I_j=i\}}
	{\pi_j(i)}.\end{equation}
These estimates are used to update the weights to increase the weights of actions with larger estimated rewards:
\begin{equation}
	\label{eq:exp3-weight-update}
	w_{j+1}(i) = w_j(i) \exp\Bigl( \frac{\gamma}{K} \widehat R_j(i) \Bigr), \qquad i\in[K].
\end{equation}
We have the following result.
\begin{theorem}
	\label{thm:blockwise-exp3}
	Fix an arbitrary exogenous state sequence $X_{1:T}$ and suppose that $n_{\max}, K, \Delta r$ are fixed and independent of~$T$.
	Under the zero-cost assumption \eqref{eq:safe-action}, the actions generated by the OCEA policy \eqref{eq:constraint-enforced-action} satisfy the constraint \eqref{eq:reliability-objective} for every realization of the learner's randomization. Moreover, let
	\begin{equation}\label{eq:gamma}
		\gamma  = \min\Biggl\{ 1, \sqrt{\frac{K\ln K}{(e-1)J}} \Biggr\}.
	\end{equation}
	The expected policy regret satisfies
	\begin{multline}
		\max_{i\in[K]} \sum_{j=1}^{J}R_j(i) - \mathbb E \Biggl[ \sum_{j=1}^{J}R_j(I_j) \Biggr] \\
			\le 2.63 \Delta r n_{\max}\sqrt{KT\ln K}, \label{eq:exp3-policy-regret-bound}
	\end{multline}
	thus meeting the requirement \eqref{eq:sublinear-policy-regret}.
\end{theorem}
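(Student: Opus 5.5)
The plan is to prove the two claims separately: cost feasibility by a deterministic induction inside each block, and the regret bound by reducing to the standard EXP3 guarantee of~\cite{auer2002nonstochastic} applied to a $J$-round adversarial bandit.

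\emph{Cost feasibility.} Fix a block $j$ and any realization of $I_j$. I would show by induction on $k$ that $z_{j,k}\le Dn_j$, i.e., $g_{j,k}\ge 0$, for all $k\in[n_j+1]$. The base case is $z_{j,1}=0\le Dn_j$. For the step there are two cases.
\begin{itemize}
\item If $g_{j,k}\ge 1$, the rule \eqref{eq:constraint-enforced-action} may play $a_{I_j}$. Since $c_{j,k}\le 1$, we get $z_{j,k+1}\le z_{j,k}+1\le Dn_j$.
\item If $g_{j,k}<1$, the rule plays $A^\star$, which has $c_{j,k}=0$ by \eqref{eq:safe-action}. Hence $z_{j,k+1}=z_{j,k}\le Dn_j$.
\end{itemize}
Taking $k=n_j$ gives $C_j(A_{1:T})=z_{j,n_j+1}/n_j\le D$. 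This holds for every $j$, every state sequence and every realization of the randomness, so \eqref{eq:reliability-objective} follows.

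\emph{Regret reduction.} The key structural observation is that, because histories reset at block boundaries and the states are exogenous, $R_j(i)$ is a deterministic function of $i$ and of $X_{j,1:n_j}$ alone. It does not depend on $I_1,\dots,I_{j-1}$ or on any earlier learner behaviour. Indeed, running \eqref{eq:constraint-enforced-action} with nominal action $a_i$ from $H_{j,1}=\emptyset$ reproduces exactly the comparator rule \eqref{eq:cost-enforced-comparator} with $a=a_i$ within block $j$. Two consequences follow.
\begin{itemize}
\item The comparator's total reward is $\sum_j R_j(\widetilde A_{1:T})=\sum_j R_j(i)$. The maximum over $\mathcal C_T(D)$ in \eqref{eq:policy-regret} therefore equals $\max_i\sum_j R_j(i)$, so policy regret coincides with the left-hand side of \eqref{eq:exp3-policy-regret-bound}.
\item The learner's block reward is exactly $R_j(I_j)$, an oblivious adversarial bandit reward vector $\bar R_j(\cdot)\in[0,1]^K$ over $J$ rounds.
\end{itemize}
This is where the memory issue of~\cite{arora2012online} is neutralized. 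I expect the main conceptual obstacle to be arguing carefully that no cross-block dependence enters. The per-block fixed-action structure makes the counterfactual and realized trajectories agree whenever $I_j=i$.

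\emph{EXP3 bound and constants.} I would invoke Corollary~3.2 of~\cite{auer2002nonstochastic}: for oblivious rewards in $[0,1]$ and $\gamma$ as in \eqref{eq:gamma}, the regret in normalized rewards is at most $2\sqrt{e-1}\sqrt{G K\ln K}$ with $G\ge$ the maximal total reward, and we may take $G=J$. This bound also covers the case $\gamma=1$, where the bound $J$ is trivially dominated. To return to unnormalized rewards I would use
\begin{equation*}
\sum_j R_j(i)-\sum_j R_j(I_j)=\Delta r\, n_{\max}\Bigl(\sum_j\bar R_j(i)-\sum_j\bar R_j(I_j)\Bigr),
\end{equation*}
which holds because the offsets $n_jr\sub{min}$ cancel. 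This gives a bound of $2\sqrt{e-1}\,\Delta r\,n_{\max}\sqrt{JK\ln K}$. Using $J\le T$ and $2\sqrt{e-1}\approx 2.62<2.63$ yields \eqref{eq:exp3-policy-regret-bound}. Since $n_{\max},K,\Delta r$ are fixed, the right-hand side is $O(\sqrt T)=o(T)$, which establishes \eqref{eq:sublinear-policy-regret}.
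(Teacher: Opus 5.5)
Your proposal is correct and takes the same route as the paper's proof. Cost feasibility follows from the enforcement rule \eqref{eq:constraint-enforced-action} together with the zero-cost action; the regret bound follows because the normalized block rewards $\bar R_j(i)$ form a fixed (oblivious) table in $[0,1]$, so Corollary~3.2 of \cite{auer2002nonstochastic} applies with $G=J$, and you then rescale by $\Delta r\, n_{\max}$ and use $J\le T$. Your write-up just makes explicit what the paper leaves implicit: the within-block induction $z_{j,k}\le Dn_j$, the identification of the maximum over $\mathcal C_T(D)$ with $\max_i\sum_j R_j(i)$, and the constant $2\sqrt{e-1}<2.63$.
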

\begin{IEEEproof}
	See Appendix~\ref{app:proof-blockwise-exp3}.
\end{IEEEproof}
It is important to emphasize that resetting the memory at the beginning of each block is crucial for obtaining the sublinear-in-$T$ expected policy-regret bound~\eqref{eq:exp3-policy-regret-bound}.
Indeed, as shown in~\cite{arora2012online}, when the memory is unbounded, it is possible to adversarially choose reward functions and state sequence that result in linear policy regret.

\section{Applications }
\label{sec:applications}

We next illustrate how to apply the OCEA policy described in Section~\ref{sec:blockwise-exp3} to
the two examples introduced in Section~\ref{sec:examples}.
In both cases, we will consider scenarios in which the decision maker has access to a state predictor, which can be used to inform its action policy.

\subsection{Robust Model Predictive Control under Safety Constraints}
\label{sec:risk-averse}

Specializing the example described in Section~\ref{sec:examples}, we consider a variation of the risk-averse decision-making problem introduced in~\cite{kiyani2025decision} (see also~\cite{farzaneh2026}) in which the action determines the level of risk aversion of the controller through the selection of a predictive controller. 

To elaborate, assume that the decision maker uses a state predictor to infer the state in round $(j,k)$ given the available in-block history $H_{j,k}$.
Specifically, in round $(j,k)$ the predictor $\widehat p_{j,k}$ returns the top-probability prediction set \begin{equation}\mathcal{X}_{j,k}=\{ x\in \mathcal X: \widehat p_{j,k}(x|H_{j,k})\geq A_{j,k}\}\end{equation} for a suitably chosen threshold  $A_{j,k}$, which acts as the action. 
Note that this construction is central to online conformal prediction methods~\cite{gibbs2021adaptive,angelopoulos2024online,angelopoulos2024conformal,zecchin2024localized}. 
Then, the controller $m_{A_{j,k}} \in \mathcal{M}$ is selected so as to minimize the worst-case safety-risk function $s_{j,k}(m,x)$ among all states in the predicted set $\mathcal X_{j,k}$, i.e., 
\begin{equation}
m_{A_{j,k}} = \argmin_{m \in \mathcal{M}} \max_{x\in \mathcal X_{j,k}} s_{j,k}(m,x).
\end{equation}
With this choice, the safety certificate is set to the corresponding worst-case safety-risk level
\begin{equation}
\nu_{A_{j,k}} = \min_{m\in\mathcal{M}} \max_{x\in \mathcal X_{j,k}} s_{j,k}(m,x).
\end{equation}

Following this approach, the cost~\eqref{eq:risk-outage} equals $c_{j,k}=0$ as long as the true state is in the predicted set, i.e., $X_{j,k} \in \mathcal X_{j,k}$. Furthermore, the zero-cost action is $A^\star_{j,k}=0$, yielding the full set $\mathcal X_{j,k} = \mathcal X$. Therefore, OCEA, selecting the predictive threshold  as described in the previous section, achieves per-episode cost constraints and sublinear expected policy regret.

\subsection{Online  Compression Under Distortion Guarantees}
\label{sec:online-compression}

For the online compression setting described in Section \ref{sec:examples}, the top-probability prediction set at round $(j,k)$ is given by $\mathcal X_{j,k} = \{ x\in\mathcal X: \widehat p_{j,k}(x\mid S_{j,k})\ge A_{j,k} \}$.
Following \cite{zecchin2026prediction}, a lossless prefix-free entropy code is constructed from the augmented truncated distribution
\begin{equation}
	\label{eq:ocsc-encoding-distribution}
	q_{j,k}(x\mid S_{j,k}) =
	\begin{cases}
		\dfrac{(1-D) \widehat p_{j,k}(x\mid S_{j,k})} {\scriptstyle\sum_{x'\in\mathcal X_{j,k}} \widehat p_{j,k}(x'\mid S_{j,k})}, & x\in\mathcal X_{j,k},\\
		D, & x=x_{\mathrm o},
	\end{cases}
\end{equation}
where $x_{\mathrm o}\notin \mathcal X$ is an outage symbol. If $X_{j,k}\in\mathcal X_{j,k}$, the encoder transmits the codeword corresponding to $X_{j,k}$ and the decoder reconstructs it exactly. Otherwise, the encoder transmits the codeword corresponding to $x_{\mathrm o}$, and the decoder outputs $\widehat X_{j,k} = \argmax_{x\notin\mathcal X_{j,k}} \widehat p_{j,k}(x\mid S_{j,k})$ where ties are broken according to a fixed criterion. 

The reward is the negative self-information of the encoded symbol, $r(A_{j,k},X_{j,k},H_{j,k}) = \log_2 q_{j,k}(\widetilde X_{j,k}\mid S_{j,k})$, where $\widetilde X_{j,k}=X_{j,k}$ if $X_{j,k}\in\mathcal X_{j,k}$ and $\widetilde X_{j,k}=x_{\mathrm o}$ otherwise. 
With these definitions, a direct application of OCEA meets the per-block erasure distortion constraints, while also being competitive in terms of achieved compression rate as measured by the policy regret.
In contrast, the online conformal sparse compression (OCSC) policy introduced in~\cite{zecchin2026prediction} provides only long-term outage-distortion guarantees and no guarantees on the compression rate.

\section{Experiments}
\label{sec:experiments}

We evaluate OCEA in the online compression setting described in Section~\ref{sec:online-compression}, and compare it with OCSC~\cite{zecchin2026prediction} on a synthetic compression problem. We refer to~\cite{zecchin2026prediction} for the complete description of OCSC and specify below the state sequence (source), predictor, and algorithm parameters used in the experiment.

We choose $\mathcal X=\{a,b,c\}$, assume that $X_t=a$ for every~$t$, and set $D=1/4$. We divide the time horizon into blocks of four symbols. At the beginning of every block $j$, the predictor is reset to the public state $S_{j,1}=a$. Within the block, the predictor depends on the most recently reconstructed symbol (i.e., $S_{j,k} = \widehat X_{j,k-1}$) according to
\begin{equation}
	\widehat p_{j,k}(\cdot\mid S_{j,k}) =
	\begin{cases}
		P_a, & S_{j,k}\in\{a,c\},\\
		P_b, & S_{j,k}=b,
	\end{cases}
\end{equation}
where $P_a= ( \tfrac12-\tfrac{\eta}{4}, \tfrac12-\tfrac{\eta}{8}, \tfrac{3\eta}{8} )$, $P_b = ( \tfrac12-\tfrac{3\eta}{4}, \tfrac12-\tfrac{3\eta}{4}, \tfrac{3\eta}{2} )$, with coordinates ordered as $(a,b,c)$. Here, $\eta$ is the OCSC step size~\cite{zecchin2026prediction}. We set $T\in\{2^6,2^7,\ldots,2^{18}\}$ and $\eta=T^{-1/2}$. For OCEA, we let $K=10$ and consider the candidate thresholds $\big\{ a_i = \frac{i-1}{K-1} \big\}_{i=1}^K$. The EXP3 exploration parameter~$\gamma$ is set according to \eqref{eq:gamma}. 
For reward normalization, we use the interval $[r_{\min},r_{\max}]=[\log_2(9\eta/32),\log_2(3/4)]$, which contains every per-symbol reward generated by the predictor.
The comparator sequence is generated according to~\eqref{eq:cost-enforced-comparator} using the nominal threshold $A^\dagger=1/9$, which maximizes the cumulative reward over the comparator set.

In Fig.~\ref{fig:synthetic-regret}, we report the normalized policy regret $\operatorname{Reg}^{\mathrm{pol}}_T/T$ for both OCSC (with threshold initialized to $1/2$) and OCEA. The OCEA results are averaged over $128$ independent realizations of the EXP3 randomization, with error bars denoting one standard deviation.

As illustrated in the figure, the normalized policy regret of OCSC remains bounded away from zero, whereas that of OCEA decreases as the time horizon grows.
The policy regret behavior of OCSC can actually be characterized analytically. Specifically, when OCSC's initial threshold is set to $1/2$ and when $\eta\in(0,1/6]$, one can show that, for every time horizon~$T$ divisible by four, its policy regret relative to the fixed action $A^\dagger$ satisfies $\operatorname{Reg}^{\mathrm{pol}}_T(\mathrm{OCSC}) \ge \frac{T}{4}\log_2\frac{968}{675}$.
Thus, its normalized regret is lower-bounded by the positive constant $\frac14\log_2(968/675)$, shown by the dashed line in Fig.~\ref{fig:synthetic-regret}.
By contrast, the decreasing normalized regret of OCEA agrees with the sublinear expected policy-regret guarantee in Theorem~\ref{thm:blockwise-exp3}.

\begin{figure}[t]
    \centering
	\begin{tikzpicture}
    \begin{axis}[
        font=\footnotesize, 
        width=\columnwidth,
        height=0.72\columnwidth,
        xmode=log,
        log basis x=2,
        xlabel={Horizon $T$},
        ylabel={Normalized policy regret $\operatorname{Reg}^{\mathrm{pol}}_T/T$},
        grid=major,
        ymin=0,
        ymax=0.17,
        ytick={0,0.05,0.10,0.15},
        yticklabels={$0$,$0.05$,$0.10$,$0.15$},
        scaled y ticks=false,
        legend style={
            at={(0.5,1.02)},
            anchor=south,
            legend columns=3,
            font=\scriptsize,
            draw=none,
            /tikz/every even column/.append style={column sep=3pt}
        },
        mark size=2pt,
    ]

    \addplot[
        red,
        thick,
        mark=*,
    ] table[
        col sep=comma,
        x=horizon,
        y=ocsc_regret_per_symbol
    ] {figs/counterexample_results.csv};
    \addlegendentry{OCSC}

    \addplot[
        blue,
        thick,
        mark=square*,
        error bars/.cd,
        y dir=both,
        y explicit,
    ] table[
        col sep=comma,
        x=horizon,
        y=ocea_regret_per_symbol,
        y error=ocea_regret_standard_deviation_per_symbol
    ] {figs/counterexample_results.csv};
    \addlegendentry{OCEA}

    \addplot[
        red,
        dashed,
        thick,
        mark=none,
    ] table[
        col sep=comma,
        x=horizon,
        y=ocsc_lower_bound
    ] {figs/counterexample_results.csv};
    \addlegendentry{OCSC lower bound}

    \end{axis}
\end{tikzpicture}
    \caption{Normalized policy regret as a function of the time horizon $T$ for OCSC and OCEA. The dashed line indicates the OCSC regret lower bound $\frac14\log_2(968/675)$. The OCEA curve corresponds to the mean over 128 independent runs, with error bars showing one standard deviation.}
    \label{fig:synthetic-regret}
\end{figure}

\section{Conclusion}

In this paper, we introduced a sequential decision framework for maximizing cumulative reward subject to deterministic per-episode cost constraints. We proposed OCEA, an EXP3-based policy that selects an action for each episode and switches to a zero-cost action when the remaining budget cannot accommodate the maximum cost of another round. OCEA guarantees that the per-episode cost constraint is satisfied for every state sequence and every realization of its randomization, while achieving sublinear expected policy regret. We specialized the framework to  risk-averse decision making and online compression. Numerical results for online compression demonstrate that the normalized regret of OCEA decreases with the time horizon, whereas that of a state-of-the-art method can remain bounded away from zero.

\appendices

\section{Proof of Theorem \ref{thm:blockwise-exp3}}
\label{app:proof-blockwise-exp3}

The cost constraint follows directly from the enforcement rule~\eqref{eq:constraint-enforced-action}. It remains to prove the policy-regret bound.
Because the state sequence is exogenous and the history is reset at the beginning of every block, the normalized rewards $\{\bar R_j(i)\}_{j\in[J],i\in[K]}$ form a fixed table in $[0,1]$.
The sampling rule, reward estimator, and weight update are therefore those of EXP3. Hence, \cite[Cor.~3.2]{auer2002nonstochastic} gives
\begin{equation}
	\label{eq:normalized_reward_regret}
	\max_{i\in[K]}\sum_{j=1}^J\bar R_j(i) -\mathbb E\Biggl[\sum_{j=1}^J\bar R_j(I_j)\Biggr] \le 2.63\sqrt{KJ\ln K}.
\end{equation}
It follows from~\eqref{eq:normalized-block-reward} that to express the regret in terms of the $\{R_j(i)\}$ we need to multiply the right-hand side of~\eqref{eq:normalized_reward_regret} by $\Delta r n_{\max}$.
Since $J\le T$, this proves \eqref{eq:exp3-policy-regret-bound}.
Finally, because $K$, $\Delta r$, and $n_{\max}$ are finite and independent of $T$, the resulting bound is sublinear in $T$.

\section*{AI Statement}

The authors used ChatGPT Sol 5.6 (OpenAI) to  develop the numerical example in Section~\ref{sec:experiments} and to establish that, for this example, OCSC exhibits linear policy regret.

\bibliographystyle{IEEEtran}
\bibliography{references}

@article{zecchin2026prediction,
    author = {Zecchin, Matteo and Ganesan, Unnikrishnan Kunnath and Durisi, Giuseppe and Popovski, Petar and Simeone, Osvaldo},
    title = {Prediction-Powered Communication With Distortion Guarantees}, 
    journal = {IEEE J. Sel. Areas Inf. Theory}, 
    month = {Feb.},
    year = {2026},
    volume = {7},
    number = {},
    pages = {33-45},
}

@inproceedings{gibbs2021adaptive,
    author = {Gibbs, Isaac and Cand{\`e}s, Emmanuel J.},
    title = {Adaptive conformal inference under distribution shift},
    booktitle = {Proc. Conf. Neural Inf. Process. Syst. (NeurIPS)},
    address = {Virtual Conference},
    month = {Dec.},
    year = {2021},
}

@inproceedings{arora2012online,
    author = {Arora, Raman and Dekel, Ofer and Tewari, Ambuj},
    title = {Online bandit learning against an adaptive adversary: from regret to policy regret},
    booktitle = {Proc. Int. Conf. Mach. Learning (ICML)},
    address = {Edinburgh, Scotland},
    month = {June},
    year = {2012},
}

@article{farzaneh2026,
    author  = {Farzaneh, Amirmohammad and Simeone, Osvaldo},
    title   = {Risk-Averse Decision Making with Multi-Level Reliability Guarantees},
    journal = {arXiv:2609.11524},
    year    = {2026},
}

@inproceedings{kiyani2025decision,
    author = {Kiyani, Shayan and Pappas, George J. and Roth, Aaron and Hassani, Hamed},
    title = {Decision Theoretic Foundations for Conformal Prediction: Optimal Uncertainty Quantification for Risk-Averse Agents},
    booktitle = {Proc. Int. Conf. Mach. Learning (ICML)},
    address = {Vancouver, Canada},
    month = {July},
    year = {2025},
}

@inproceedings{ganesan2025online,
    author = {Ganesan, Unnikrishnan Kunnath and Durisi, Giuseppe and Zecchin, Matteo and Popovski, Petar and Simeone, Osvaldo},
    title = {Online conformal compression for zero-delay communication with distortion guarantees},
    booktitle = {Proc. IEEE Int. Symp. Inf. Theory (ISIT)},
    address = {Ann Arbor, Michigan, USA},
    month = {June},
    year = {2025},
}

@article{auer2002nonstochastic,
    author = {Auer, Peter and Cesa-Bianchi, Nicol{\`o} and Freund, Yoav and Schapire, Robert E.},
    title = {The Nonstochastic Multiarmed Bandit Problem},
    journal = {SIAM J. Comput.},
    month = {},
    year = {2002},
    volume = {32},
    number = {1},
    pages = {48-77},
}

@inproceedings{angelopoulos2024online,
    author    = {Angelopoulos, Anastasios N. and Barber, Rina Foygel and Bates, Stephen},
    title     = {Online Conformal Prediction with Decaying Step Sizes},
    booktitle = {Proc. Int. Conf. Mach. Learn. (ICML)},
    address   = {Vienna, Austria},
    month     = {July},
    year      = {2024},
}

@inproceedings{angelopoulos2024conformal,
    author    = {Angelopoulos, Anastasios N. and Bates, Stephen and Fisch, Adam and Lei, Lihua and Schuster, Tal},
    title     = {Conformal Risk Control},
    booktitle = {Proc. Int. Conf. Learn. Represent. (ICLR)},
    address   = {Vienna, Austria},
    month     = {May},
    year      = {2024},
}

@inproceedings{zecchin2024localized,
    author    = {Zecchin, Matteo and Simeone, Osvaldo},
    title     = {Localized Adaptive Risk Control},
    booktitle = {Proc. Conf. Neural Inf. Process. Syst. (NeurIPS)},
    address   = {Vancouver, Canada},
    month     = {Dec.},
    year      = {2024}
}

@inproceedings{lekeufack2024conformal,
    author    = {Lekeufack, Jordan and Angelopoulos, Anastasios N. and Bajcsy, Andrea and Jordan, Michael I. and Malik, Jitendra},
    title     = {Conformal Decision Theory: Safe Autonomous Decisions from Imperfect Predictions},
    booktitle = {Proc. IEEE Int. Conf. Robot. Autom. (ICRA)},
    address   = {Yokohama, Japan},
    month     = {May},
    year      = {2024},
}

@article{lindemann2023safe,
    author  = {Lindemann, Lars and Cleaveland, Matthew and Shim, Gihyun and Pappas, George J.},
    title   = {Safe Planning in Dynamic Environments Using Conformal Prediction},
    journal = {IEEE Robot. Autom. Lett.},
    month   = {Aug.},
    year    = {2023},
    volume  = {8},
    number  = {8},
    pages   = {5116--5123},
}

@article{lindemann2025formal,
    author  = {Lindemann, Lars and Zhao, Yiqi and Yu, Xinyi and Pappas, George J. and Deshmukh, Jyotirmoy V.},
    title   = {Formal Verification and Control with Conformal Prediction: Practical Safety Guarantees for Autonomous Systems},
    journal = {IEEE Control Syst. Mag.},
    year    = {2025},
    month   = {Dec.},
    volume  = {45},
    number  = {6},
    pages   = {72--122},
}

@article{zecchin2024forking,
    author  = {Zecchin, Matteo and Park, Sangwoo and Simeone, Osvaldo},
    title   = {Forking Uncertainties: Reliable Prediction and Model Predictive Control With Sequence Models via Conformal Risk Control},
    journal = {IEEE J. Sel. Areas Inf. Theory},
    month   = {Feb.},
    year    = {2024},
    volume  = {5},
    number  = {},
    pages   = {44--61},
}

@inproceedings{cairoli2023conformal,
    author    = {Cairoli, Francesca and Paoletti, Nicola and Bortolussi, Luca},
    title     = {Conformal Quantitative Predictive Monitoring of {STL} Requirements for Stochastic Processes},
    booktitle = {Proc. ACM Int. Conf. Hybrid Syst.: Comput. Control (HSCC)},
    address   = {San Antonio, TX, USA},
    month     = {May},
    year      = {2023},
}

@inproceedings{zhu2026conformal,
    author    = {Zhu, Zihan and Kiyani, Shayan and Pappas, George J. and Hassani, Hamed},
    title     = {Conformal Risk-Averse Decision Making with Action Conditional Guarantee},
    booktitle = {Proc. Int. Conf. Mach. Learning (ICML)},
    address   = {Seoul, South Korea},
    month     = {July},
    year      = {2026}
}

@article{simeone2026conformal,
    author  = {Simeone, Osvaldo and Park, Sangwoo and Zecchin, Matteo},
    title   = {Conformal Calibration: Ensuring the Reliability of Black-Box {AI} in Wireless Systems},
    journal = {IEEE Commun. Mag.},
    year    = {2026},
    month   = {May},
    volume  = {64},
    number  = {8},
    pages   = {179--187},
}

@article{mahdavi2012trading,
    author  = {Mahdavi, Mehrdad and Jin, Rong and Yang, Tianbao},
    title   = {Trading Regret for Efficiency: Online Convex Optimization with Long Term Constraints},
    journal = {J. Mach. Learn. Res.},
    month   = {Sep.},
    year    = {2012},
    volume  = {13},
    number  = {1},
    pages   = {2503--2528},
}

@inproceedings{yu2017online,
    author    = {Yu, Hao and Neely, Michael J. and Wei, Xiaohan},
    title     = {Online Convex Optimization with Stochastic Constraints},
    booktitle = {Proc. Conf. Neural Inf. Process. Syst. (NeurIPS)},
    address   = {Long Beach, CA, USA},
    month     = {Dec.},
    year      = {2017}
}
\end{document}